\documentclass{article}

     \PassOptionsToPackage{numbers, compress}{natbib}

\usepackage[dblblindworkshop, final]{neurips_2026}

\usepackage[utf8]{inputenc} 
\usepackage[T1]{fontenc}    
\usepackage{hyperref}       
\usepackage{url}            
\usepackage{booktabs}       
\usepackage{amsfonts}       
\usepackage{nicefrac}       
\usepackage{microtype}      
\usepackage{xcolor}         
\usepackage{comment}
\usepackage{float}
\usepackage{multirow}
\usepackage{microtype}

\usepackage{graphicx}
\usepackage{amsmath,amssymb,mathtools}
\usepackage{amsthm}
\usepackage{bbm} 
\usepackage{multirow}
\usepackage[table]{xcolor}
\usepackage{enumitem}
\usepackage{placeins}
\usepackage{subcaption}
\usepackage{pgf}
\usepackage{tikz}
\usepackage{adjustbox}
\usepackage{cleveref}
\usepackage{bbm} 
\usepackage{pgfplots} 
\usepackage{todonotes}
\usepackage{comment}
\usepackage[most]{tcolorbox}
\usepackage{algorithm}
\usepackage{algpseudocode}
\usepackage{paralist}
\usepackage{wrapfig}

\usepackage{thmtools,thm-restate}
\usepackage{svg}
\usepackage[most]{tcolorbox}
\usepackage{xcolor}
\definecolor{colfirst}{RGB}{0,114,178}
\definecolor{colsecond}{RGB}{213,94,0}
\usepackage{tabularx}
\usepackage{ragged2e}
\theoremstyle{plain}

\newlength{\figfourheight}
\definecolor{seabornGreen}{HTML}{029E72}
\definecolor{colfirst}{HTML}{029e72}
\definecolor{colsecond}{HTML}{de8f03}
\providecommand{\mathdefault}[1]{#1}
\newtcolorbox{desideratum}[1]{
    colback=seabornGreen!8,
    colframe=seabornGreen,
    boxrule=0.5pt,
    arc=2pt,
    left=6pt,
    right=6pt,
    top=5pt,
    bottom=5pt,
    title=\textbf{#1},
    fonttitle=\normalsize,
    coltitle=white
}

\workshoptitle{Linguistic Principles for Foundation Models (LP4FM)}

\title{
    What Is a Clarification? Estimating Model-Dependent Input Ambiguity
}

\author{%
  \textbf{Jingqin Tang$^{*1}$ \quad Tim Tomov$^{*123}$ \quad Sebastian Schmidt$^{1}$} \\
  \textbf{Dominik Fuchsgruber$^{12}$ \quad Stephan G{\"u}nnemann$^{123}$} \\[0.5em]
  $^1$ School of Computation, Information \& Technology, Technical University of Munich \\
  $^2$ Munich Data Science Institute \\
  $^3$ Munich Center for Machine Learning \\[0.5em]
  \texttt{\{jq.tang,tim.tomov,sebastian95.schmidt,d.fuchsgruber,s.guennemann\}@tum.de} \\
}

\begin{document}

\maketitle
\begingroup
\renewcommand{\thefootnote}{*}
\footnotetext{Equal contribution.}
\endgroup

\newcommand{\circleclar}{\textsc{CircleClar}}
\newcommand{\indicator}{\mathbbm{1}}

\begin{abstract}
\looseness=-1 Ambiguous inputs can admit multiple interpretations, each potentially yielding a different answer. Existing work largely treats this ambiguity as a property of the input, yet for a particular language model, only some alternatives may be known and answerable. We therefore distinguish world ambiguity, capturing all alternatives admitted by an input, from model ambiguity, capturing those accessible to the responding model. To characterize these alternatives, we formalize natural-language clarifications through three desiderata: \emph{uniqueness}, \emph{faithfulness}, and \emph{answer equivalence}. We show that these desiderata induce a one-to-one correspondence between valid answers and equivalence classes of clarifications, implying that any interpretation can be represented either by its answer or by the clarifications that uniquely express it. We leverage this correspondence to introduce \circleclar{}, a black-box method that estimates the set of answers available to a model by checking consistency along an answer--clarification--answer cycle. Across open-domain QA and executable Text-to-SQL, \circleclar{} achieves strong recovery of these answer sets. In QA, the recovered support yields the strongest overall performance in model-dependent ambiguity detection and enables effective selection among answering, clarifying, and abstaining. Together, these results demonstrate the practical value of a model-dependent view of ambiguity and its characterization through an answer-clarification correspondence.

\end{abstract}

\section{Introduction}
\label{sec:introduction}

Ambiguous inputs can admit multiple reasonable interpretations, each potentially leading to a different answer. Existing work has largely treated this ambiguity as a property of the input itself \citep{min-etal-2020-ambigqa,NEURIPS2024_a4c942a8,cole-etal-2023-selectively, pmlr-v235-hou24b,liu-etal-2023-afraid, kim-etal-2023-tree, saparina2026reasoningintentambiguousrequests}. However, when an ambiguous input is presented to a Large Language Model, not every valid interpretation is necessarily accessible and answerable by that model. Consider, for example, the question \emph{``Who voiced the snake in the Jungle Book?''}. The intended adaptation is unspecified: for the 1967 film, the answer is Sterling Holloway, whereas for the 2016 film, it is Scarlett Johansson. While both interpretations are valid, one model may know both films and their answers, whereas another may know only one. The ambiguity inherent in the question remains unchanged; what changes is \emph{how much of that ambiguity is accessible to the responding model}.

\looseness=-1 We therefore propose to distinguish between \emph{world ambiguity} and \emph{model ambiguity} (\Cref{fig:figure1}). World ambiguity captures the alternatives admitted by the input, independent of any particular model. Model ambiguity instead captures those alternatives that are \emph{known and answerable} to the responding model. The distinction is consequential: two models can face the same world ambiguity but, because they know different parts of its ambiguity, effectively face different problems. A model that knows several valid alternatives should recognize that committing to one leaves the user's intent unresolved; a model that knows only one alternative, or none, is in a fundamentally different situation. We therefore ask  \emph{how to estimate the interpretations and corresponding answers a particular model knows}.

To do so, we use natural-language clarifications \citep{min-etal-2020-ambigqa,kim-etal-2023-tree}, which are explicit representations of distinct interpretations of the original input. While such clarifications are widely used in prior work, their precise definition is often omitted \citep{pmlr-v235-hou24b,epub135635,rosseeuw2025clara}. Hence, we propose to characterize valid clarifications through three desiderata:
\begin{inparaenum}[(i)]
    \item \emph{uniqueness}: a clarification should admit only a single answer;
    \item \emph{faithfulness}: that answer must also be a valid answer for the original question; and
    \item \emph{answer equivalence}: clarifications that induce the same answer should be treated as equivalent.
\end{inparaenum}
Our key observation is that, under these desiderata, each valid answer corresponds to exactly one clarification class, thereby providing two complementary representations of the same interpretation (\Cref{fig:figure1}b). 

This correspondence suggests a simple way to estimate what ambiguity a model can realize. If a model proposes an answer, expresses the corresponding interpretation through a clarification, and then reliably returns to the same answer under that clarification, the resulting cycle provides additional evidence that this alternative is accessible to the model. We turn this idea into \circleclar{}, a black-box method for estimating the model-specific set of realizable answers (\Cref{fig:figure1}c). We evaluate \circleclar{} on open-domain QA using AmbigQA \citep{min-etal-2020-ambigqa} and our curated high-quality version (AmbigQA-HQ), and on executable Text-to-SQL using AMBROSIA \citep{NEURIPS2024_a4c942a8}. \circleclar{} achieves the best answer-set F1 in seven of twelve model–dataset settings and outperforms direct-generation baselines across all QA settings. Moreover, its cardinality can predict model-dependent ambiguity and serves as an effective basis for deciding whether to answer, clarify, or abstain. Taken together, these results support a model-dependent view of ambiguity: what matters for a particular LLM is not only which alternatives an input admits, but which of them the model can realize.

\begin{figure}[t]
    \centering
    \scalebox{1}[1.08]{%
        \includegraphics[width=.99\linewidth]{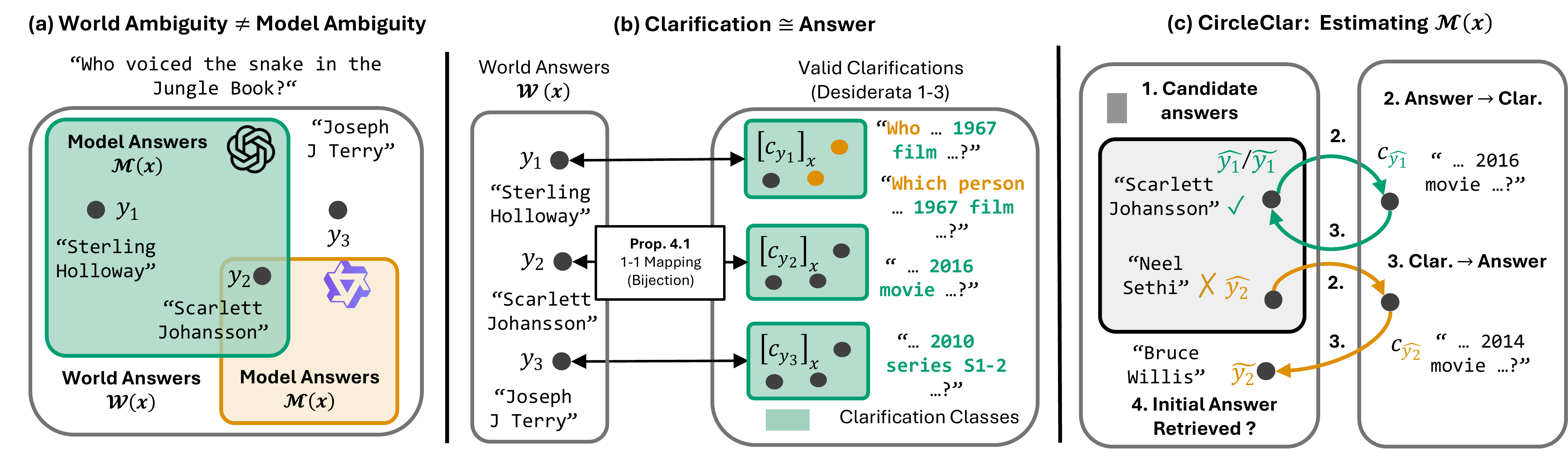}
    }
    \vspace{-0.25em}
    \caption{
        \textbf{Overview of our framework}: illustrated with the ambiguous question ``Who voiced the snake in \textit{The Jungle Book}?''.
        (a) The world answer support \(\mathcal{W}_T(x)\) is fixed by the input, while the model-realizable support depends on the model's knowledge.
        (b) Proposition~4.1 establishes a one-to-one correspondence between world-supported answers and clarification classes representing the same interpretations.
        (c) \circleclar{} estimates the model-realizable support by testing whether candidate answers are recovered through answer--clarification--answer cycles.
    }
    \label{fig:figure1}
    \vspace{-0.35em}
\end{figure}

\section{Related Work}

\paragraph{Representing input ambiguity.}
\looseness=-1 A common way to study input ambiguity is to make the possible interpretations of an underspecified input explicit.
In open-domain QA, interpretations can be expressed through disambiguated questions, either provided as annotations \citep{min-etal-2020-ambigqa} or generated recursively as part of answering ambiguous questions \citep{kim-etal-2023-tree}.
In semantic parsing, where natural-language inputs are mapped to executable representations, such as SQL queries or tool calls, different interpretations can be expressed as distinct executable programs~\citep{bhaskar-etal-2023-benchmarking,NEURIPS2024_a4c942a8}, or first stated in natural language before being mapped to logical forms~\citep{saparina-lapata-2025-disambiguate}. Related work also makes alternative readings explicit as structured interpretation--answer pairs~\citep{saparina2026reasoningintentambiguousrequests}, while \citet{tomov2026illusioncertaintyuncertaintyquantification} take a distributional view, representing ambiguity through a ground-truth distribution over semantic answer classes and their relative likelihoods.
Beyond such explicit representations, \citet{pmlr-v267-kirchhof25b} takes a more general view of underspecification, describing it as uncertainty about the intended task or as missing contextual information, modeled via a latent variable model.

\paragraph{Estimating input ambiguity.} 
Recent work treats input ambiguity as a quantity that can be estimated from model outputs.
ICE~\citep{pmlr-v235-hou24b} treats it as a source of aleatoric uncertainty. It generates multiple clarifications of the input, obtains predictions for each, and ensembles them to separate aleatoric uncertainty from epistemic uncertainty.
Spectral Uncertainty~\citep{epub135635} extends this approach by doing soft clustering via embeddings.
CLARA~\citep{rosseeuw2025clara} generates multiple clarifications and uses their semantic diversity as an ambiguity signal, while ~\citet{shi-etal-2025-ambiguity} detect ambiguity from sampled-answer distributions.
CLUES~\citep{ziletti-dambrosi-2026-disentangling} instead uses a two-stage process from interpretations to answers, separating uncertainty across interpretations from instability in their associated answers.

\paragraph{Clarifications \& input ambiguity.}
Beyond estimating ambiguity itself, other work incorporates ambiguity detection or recognition into clarification systems, benchmarks, and policies.
~\citet{lee-etal-2023-asking} formulate open-domain QA as a pipeline that detects ambiguous questions, presents candidate interpretations in a clarification question, and answers based on the user's response.
CLAM~\citep{kuhn2023clamselectiveclarificationambiguous} similarly classifies whether a question is ambiguous and selectively requests clarification before answering.
CLAMBER~\citep{zhang-etal-2024-clamber} jointly benchmarks the identification of ambiguous information needs and the generation of clarification questions.
INTENT-SIM~\citep{zhang-choi-2025-clarify} estimates entropy over simulated user intents to determine when clarification is beneficial.
ACT~\citep{chen2025learning} learns multi-turn clarification policies and evaluates implicit ambiguity recognition through models' clarify-or-answer decisions.
Zhang, Knox, and Choi~\citep{zhang2025modeling} use simulated future conversation turns to train clarify-or-answer behavior and evaluate the resulting decisions against ambiguity labels.
Su and Cardie~\citep{su2026knowingshowingllmsrecognize} further show that ambiguity recognition capability does not necessarily translate into clarification behavior: models often identify an input as ambiguous when asked explicitly, yet default to answering in ordinary QA.
Closely related to our work, BAG~\citep{baan2026clarify} uses sampled responses as a belief state to guide whether to answer, clarify, or abstain. Its BAG3 variant groups responses by meaning and proposes interpretations for each group to distinguish alternative readings from factual inconsistencies.

\looseness=-1Overall, prior work studied how ambiguity can be represented, estimated, and handled through clarifications. Far less attention has been paid to how ambiguity varies with the responding model's own knowledge. APA~\citep{kim-etal-2024-aligning} recognizes this, defining perceived ambiguity as a property of a particular model and measuring it through the reduction in the model's average token entropy from the original question to a self-disambiguated version.
While conceptually sharing this perspective, our work goes further by formally distinguishing the two different types of ambiguity and defining not only a quantitative estimator but also actionable estimates of clarifications and their answers.




\section{Why Input Ambiguity Should be Model Dependent}
\label{sec:motivation}

\looseness=-1 Ambiguity is typically treated as a property of the input: a question is ambiguous when it admits multiple valid interpretations. As such, most prior work \citep{min-etal-2020-ambigqa,cole-etal-2023-selectively,liu-etal-2023-afraid,kim-etal-2023-tree,pmlr-v235-hou24b,NEURIPS2024_a4c942a8} studies ambiguity independently of the used model. Similar to \citet{kim-etal-2024-aligning}, we propose to \textit{treat ambiguity as dependent on model knowledge and, therefore, as a model-dependent property}. We formalize this distinction below and empirically show that it varies substantially across models (\Cref{fig:model-realizable-space}), motivating our model-dependent framework for characterizing ambiguity.

\paragraph{Preliminaries.}
LLMs implicitly define a probability distribution \(p(s)\) over arbitrary-length strings \(s \in \mathcal{V}^{<\infty}\) from a vocabulary \(\mathcal{V}\). Conditioned on an input \(x \in \mathcal{V}^{<\infty}\), the model induces a task-specific predictive distribution \(p_T(y \mid x)\) over answers \(y\) for a task \(T\). We analogously denote by \(p_T^\star(y \mid x)\) the corresponding unknown ground-truth distribution. Importantly, we consider answers in a task-relevant output space \(\mathcal{Y}_T\), which abstracts away distinctions that are irrelevant to the task, like surface form or syntax, by aggregating responses with similar semantic meaning into one element \cite{tomov2026taskawareness}. For question answering, for example, semantically equivalent answer strings correspond to the same element of \(\mathcal{Y}_T\); for executable outputs, such as SQL queries, syntactically different forms are treated as equivalent when they return the same requested information.

\textbf{World vs. Model Ambiguity.}
An ambiguous question \(x\) admits multiple interpretations, which may in turn lead to different correct answers. 
In the running example, the unspecified film yields Sterling Holloway or Scarlett Johansson. 
We call the set of all such answers the \emph{world answer support},
\[
    \mathcal{W}_T(x)
    :=
    \operatorname{supp} p_T^\star(\cdot \mid x)
    =
    \{y \in \mathcal{Y}_T : p_T^\star(y \mid x) > 0\},
\]
where \(p_T^\star\) denotes an oracle or ground-truth mapping.
The corresponding \emph{world ambiguity} is simply
\(|\mathcal{W}_T(x)|\), the number of correct answers admitted by the question.
In particular, \(x\) is \emph{world} ambiguous whenever there is more than one correct answer, i.e.
\(|\mathcal{W}_T(x)| > 1\).

Crucially, however, world ambiguity need not coincide with the ambiguity realized by a particular model.
Returning to our example, a model that knows both Jungle Book films can realize that the question admits at least two interpretations and associate each with its corresponding answer. In contrast, another model may only know the 2016 film---due, for instance, to differences in training data, capacity, distillation, or compression---and therefore realize only Scarlett Johansson as a valid answer.
The underlying question is equally ambiguous in both cases: its world answer support \(\mathcal{W}_T(x)\) has not changed.
What changes is which parts of this ambiguity are accessible to the model. To capture this distinction, for a model \(p_T\) we define its \emph{model-realizable answer support} as
\[
    \mathcal{M}_T(x)
    :=
    \operatorname{supp} p_T(\cdot \mid x) \cap \mathcal{W}_T(x)
    =
    \{y \in \mathcal{W}_T(x) : p_T(y \mid x) > 0\}.\footnote{Technically, an LLM without top-\(p\) or top-\(k\) sampling assigns non-zero probability to every sequence; in practice, we consider outcomes with non-negligible probability mass.}
\]
\begin{wrapfigure}{r}{0.37\columnwidth}
    \centering
    \resizebox{\linewidth}{!}{%
        \input{figures/model_realizable_combined.pgf}%
    }
    \caption{\textbf{Model-realizable support on AmbigQA.} Mean size (top) and Spearman correlation with world support size (bottom). Error bars show 95\% bootstrap CIs over questions.}
    \label{fig:model-realizable-space}
    \vspace{-2em}
\end{wrapfigure}

Accordingly, we define the \emph{model-realizable ambiguity} of \(x\) as
\(|\mathcal{M}_T(x)|\).
Unlike world ambiguity, this quantity is inherently model-dependent:
Two models facing the same question may realize different subsets of the valid answers.
We observe precisely this effect in \Cref{fig:model-realizable-space}: larger, more capable models tend to realize more of the ambiguity present in the world. Their model-realizable support also tracks the world support more closely across inputs, as shown by a higher input-level Spearman correlation between
$\lvert\mathcal{M}_T(x)\rvert$ and $\lvert\mathcal{W}_T(x)\rvert$. Nonetheless, even the strongest model remains substantially below the mean world-support size.

This distinction between these two forms of ambiguity matters because, when deciding how a \emph{particular model} should respond, world ambiguity alone is insufficient.
Suppose \(|\mathcal{W}_T(x)|>1\), but a model realizes only one of these answers.
\textit{From an oracle perspective, the question is ambiguous. From the model's perspective, there is no alternative interpretation on which it can meaningfully act.}
Conversely, a model that realizes several valid answers should recognize that answering with any single one implicitly commits to a specific interpretation.
Thus, the appropriate response---whether to answer, clarify, or abstain---depends not only on which answers exist, but on which of them are realizable by the model.
Our goal is therefore not to recover \(\mathcal{W}_T(x)\), but to estimate the model-specific set \(\mathcal{M}_T(x)\).

\section{A Clarification Framework for Estimating Model-Dependent Ambiguity}

\looseness=-1 We now turn to the question of how to estimate the model-realizable answer support \(\mathcal{M}_T(x)\). To do so, we use clarifications, which provide a natural way to express interpretations as natural-language inputs for the model.
While clarifications have been previously used \citep{min-etal-2020-ambigqa,
kim-etal-2023-tree,
pmlr-v235-hou24b,
rosseeuw2025clara},  a precise formal characterization of what constitutes a valid clarification has, to our knowledge, not been provided. This absence is consequential: without a rigorous definition, clarifications may possess undesirable properties that introduce confounds into downstream estimators. To address this gap, we are the first to formally characterize valid clarifications through a set of explicit desiderata (\Cref{sec:4.1}). Crucially, we prove that, under these desiderata, the set of clarifications admits an \emph{equivalent representation} of the answer space. We then exploit this equivalence to construct \circleclar{}, a black-box estimator of $\mathcal{M}_T(x)$.



\subsection{Formally Defining Clarifications} 
\label{sec:4.1}
We begin with the most basic requirement: a clarification must actually resolve the ambiguity. Consider again the ambiguous question \emph{``Who voiced the snake in The Jungle Book?''} in which the adaptation is unspecified. A logical clarification would be \emph{``Who voiced the snake in the 1967 animated film \textit{The Jungle Book}?''}. This reformulation resolves the ambiguity by selecting a single interpretation and, consequently, a particular answer. This motivates our first desideratum.

\begin{desideratum}{Desideratum 1: Uniqueness} A clarification shall uniquely determine an answer: \[ \exists y_c \in \mathcal{Y}_T \quad \text{such that} \quad p_T^\star(y_c \mid c)=1. \] \end{desideratum} Without uniqueness, the clarification itself remains ambiguous and therefore does not isolate one particular unambiguous reading of the original input. Uniqueness alone, however, is not sufficient. For example, \emph{``Who voiced Mowgli in the 1967 animated film \textit{The Jungle Book}?''} is also unambiguous, but its answer is not a valid answer to the original question about the snake. A clarification must therefore not only resolve the ambiguity, but also preserve the meaning of the original input.

\begin{desideratum}{Desideratum 2: Faithfulness} The answer induced by a clarification must also be supported by the original ambiguous input prompt: \[ y_c \in \mathcal{W}_T(x). \] \end{desideratum} 

\looseness=-1 Together, \emph{Uniqueness} and \emph{Faithfulness} ensure that a clarification selects exactly one valid answer from \(\mathcal{W}_T(x)\). In our example, specifying the 1967 film selects Sterling Holloway, while specifying the 2016 film selects Scarlett Johansson; both are valid answers to the original question. Importantly, however, the clarification itself is not unique. The same interpretation can typically be expressed in different ways. For instance, \emph{``Who voiced Kaa in Disney's 1967 animated \textit{The Jungle Book}?''} and \emph{``Who is the voice actor for the snake Kaa in the 1967 \textit{Jungle Book} film?''} differ in wording, but both select the same answer, Sterling Holloway. For our purposes, these surface-form differences are irrelevant.

\begin{desideratum}{Desideratum 3: Answer Equivalence} Let $\mathcal{C}_T(x)$ denote the set of valid clarifications of $x$ for task $T$. Two clarifications are equivalent exactly when they induce the same answer. For two valid clarifications \(c,c'\in \mathcal{C}_T(x)\), \[ c \sim_x c' \quad\Longleftrightarrow\quad y_c = y_{c'}. \] \end{desideratum}

\emph{Answer Equivalence} therefore groups different surface forms that express the same interpretation \textit{into classes}. In our example, all valid clarifications that specify the 1967 film belong to one class, since they all induce the answer Sterling Holloway. We denote the class containing a clarification \(c\) by
\(
    [c]_x
    :=
    \{c' \in \mathcal{C}_T(x) : c' \sim_x c\},
\)
and write \(y_{[c]}\) for the answer shared by all clarifications in this class, while writing \(\mathcal{C}_T(x)/{\sim_x}\) for the set of all such equivalence classes.

The three desiderata give us a useful structure: Each clarification class identifies exactly one valid answer to the original input. If we additionally assume that every valid answer can be expressed by at least one clarification, then the reverse also holds: every answer in \(\mathcal{W}_T(x)\) is represented by exactly one clarification class. This gives the following correspondence (Proof in Appendix~\ref{app:proof-bijection}):

\begin{restatable}[Clarification--Answer Bijection]
    {proposition}{clarificationbijection}
\label{prop:clarification-bijection}
Suppose clarifications satisfy D1--D3 and every
\(y \in \mathcal{W}_T(x)\) admits at least one valid clarification.
Then the map from clarification to response
\[
\Phi_x : \mathcal{C}_T(x)/{\sim_x}
\longrightarrow \mathcal{W}_T(x),
\qquad
\Phi_x([c]_x) = y_{[c]},
\]
is a bijection.
\end{restatable}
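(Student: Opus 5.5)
The proof is a direct set-theoretic verification in three steps: $\Phi_x$ is well defined, injective, and surjective. Throughout, a valid clarification $c \in \mathcal{C}_T(x)$ is one satisfying D1 and D2. We write $y_c$ for its induced answer.

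\textbf{Well-definedness.} We first check that $y_c$ is uniquely determined by $c$. By D1 there is some $y_c$ with $p_T^\star(y_c \mid c)=1$. Since $p_T^\star(\cdot\mid c)$ is a probability distribution on $\mathcal{Y}_T$, any other $y' \neq y_c$ has $p_T^\star(y'\mid c) \le 1 - p_T^\star(y_c \mid c) = 0$, so $y_c$ is unique. We then check that the value of $\Phi_x$ does not depend on the representative. If $c' \in [c]_x$, then $c' \sim_x c$, and by D3 this means $y_{c'} = y_c$. Hence $y_{[c]}$ is well defined. Finally, D2 gives $y_c \in \mathcal{W}_T(x)$, so $\Phi_x$ indeed maps into $\mathcal{W}_T(x)$.

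\textbf{Injectivity.} Suppose $\Phi_x([c]_x) = \Phi_x([c']_x)$, that is, $y_c = y_{c'}$. By the reverse direction of the biconditional in D3, $c \sim_x c'$. Before concluding $[c]_x = [c']_x$, we note that $\sim_x$ is an equivalence relation, because it is the kernel of the function $c \mapsto y_c$. Therefore $[c]_x = [c']_x$.

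\textbf{Surjectivity.} Let $y \in \mathcal{W}_T(x)$. By the additional hypothesis there is a valid clarification $c \in \mathcal{C}_T(x)$ whose induced answer is $y$, so $\Phi_x([c]_x) = y_c = y$.

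The only step needing any care is well-definedness: D1 as stated is an existence claim, and we need it to yield a \emph{function} $c \mapsto y_c$. The probability-mass argument above handles this. The rest is the standard fact that a map induces a bijection between the quotient by its kernel and its image, and here the image is all of $\mathcal{W}_T(x)$ by the surjectivity hypothesis.
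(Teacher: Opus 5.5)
Your proposal is correct and follows the paper's proof: well-definedness from D1--D3, injectivity from the reverse direction of D3, and surjectivity from the extra hypothesis. Your added remarks, that D1 forces $y_c$ to be unique by a probability-mass argument and that $\sim_x$ is an equivalence relation as the kernel of $c \mapsto y_c$, make explicit two points the paper takes for granted without changing the argument.
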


For each \(y \in \mathcal{W}_T(x)\), we write \([c_y]_x\) for the unique clarification class corresponding to \(y\) under this bijection. Returning once more to our example, the interpretation referring to the 1967 film can now be represented either by the answer \emph{Sterling Holloway} or by the entire class of clarifications that uniquely lead to this answer, such as \emph{``Who voiced the snake in the 1967 animated film The Jungle Book?''}. More generally,
Proposition~\ref{prop:clarification-bijection} gives us two equivalent
representations of each interpretation: its answer \(y\) and a class
\([c_y]_x\) of natural-language clarifications that induce it. Hence, starting from an answer \(y\), we can move to its corresponding clarification class \([c_y]_x\), and any clarification from this class maps back to the same answer \(y\). This yields the cycle \(y \longrightarrow [c_y]_x \longrightarrow y\), the key consistency relation underlying the estimator introduced next.

\begin{algorithm}[t!]
\caption{CircleClar}
\label{alg:mros}
\begin{algorithmic}[1]
\Require Input $x$, model $M$, proposal budget $B$, recovery samples $R$,
stability threshold $\tau$
\Ensure Estimated model-realizable answer support $\widehat{\mathcal{M}}_T(x)$

\State $\widehat{\mathcal{M}}_T(x) \gets \emptyset$
\State Draw $B$ answer collections $Y_1,\ldots,Y_B$ from $M$ on $x$
\State $\widetilde{\mathcal{Y}}_T(x) \gets
\operatorname{Canon}_T\!\left(\bigcup_{b=1}^{B}Y_b\right)$

\For{$\widetilde{y} \in \widetilde{\mathcal{Y}}_T(x)$}
    \State Generate a clarification $c_{\widetilde{y}}$ targeting
    $\widetilde{y}$
    \State Re-answer $c_{\widetilde{y}}$ independently $R$ times and map
    the responses to semantic answer classes $z_1,\ldots,z_R$
    \State $\widehat{q}_{\widetilde{y}} \gets
    \frac{1}{R}\sum_{r=1}^{R}\indicator\{z_r=\widetilde{y}\}$
    \If{$\widetilde{y}$ is the unique most frequent answer and
    $\widehat{q}_{\widetilde{y}}\geq\tau$}
        \State $\widehat{\mathcal{M}}_T(x)
        \gets \widehat{\mathcal{M}}_T(x) \cup \{\widetilde{y}\}$
    \EndIf
\EndFor

\State \Return $\widehat{\mathcal{M}}_T(x)$
\end{algorithmic}
\end{algorithm}

\subsection{\circleclar{}: Estimating Model-Realizable Answer Support}
\label{sec:circleclar}



The answer--clarification correspondence established under the ideal task distribution \(p_T^\star\) provides a consistency criterion for testing candidate answers using the model distribution \(p_T\):
if a candidate \(\widetilde y\) corresponds to an answer of a valid interpretation of \(x\), then expressing that interpretation through a clarification should make \(\widetilde y\) uniquely correct, and the model should recover it when answering the clarified input. Reliably recovering \(\widetilde y\) under \(p_T\) therefore provides evidence that \(\widetilde y \in \mathcal{M}_T(x)\).

\looseness=-1 As illustrated in Figure~\ref{fig:figure1}c, \circleclar{} operationalizes this idea in two steps: it first constructs a set of candidate outcomes supported by the model, and then uses the answer--clarification--answer cycle as a consistency test to filter these candidates. Concretely, we first query the model $B$ times with an enumeration prompt that asks for all plausible answers to the original input $x$, obtaining answer collections $Y_1,\ldots,Y_B$. We then take the union over these $B$ answer sets and map the resulting responses to semantic answer classes, yielding the candidate set \(\widetilde{\mathcal{Y}}_T(x)\). This candidate set provides an approximation to the model's predictive support, \(\widetilde{\mathcal{Y}}_T(x) \approx \operatorname{supp} p_T(\cdot \mid x)\). For each candidate \(\widetilde{y} \in \widetilde{\mathcal{Y}}_T(x)\), we ask the model to generate a clarification \(c_{\widetilde{y}}\) that makes the corresponding interpretation explicit. We then close the cycle by prompting the same LLM to answer this clarification again. If the clarification truly represents the same interpretation, the model should return to the original candidate \(\widetilde{y}\). We capture this through the recovery probability \(p_T(\widetilde{y}\mid c_{\widetilde{y}})\), retaining candidates whose recovery probability meets or exceeds a threshold \(\tau\). In practice, we approximate \(p_T(\widetilde{y}\mid c_{\widetilde{y}})\) using \(R\) Monte Carlo samples from \(p_T(\cdot\mid c_{\widetilde{y}})\). The complete algorithm is in \Cref{alg:mros} with prompts in Appendix~\ref{app:implementation}.

\section{Experiments}
\label{sec:evaluation}

Our primary evaluation asks how accurately \circleclar{} recovers the model-realizable answer support.
We then evaluate two downstream uses of the recovered support in QA: using its cardinality to detect model-dependent ambiguity and deriving the response action from the estimated support.

\paragraph{Datasets.}
We evaluate \circleclar{} across two domains. For open-domain question answering, we use 1{,}000 ambiguous questions from AmbigQA \citep{min-etal-2020-ambigqa}, together with AmbigQA-HQ, a manually curated set of 100 questions derived primarily from AmbigQA with more complete and reliable rewrite--answer annotations (see Appendix~\ref{app:ambigqa-hq}). For executable Text-to-SQL, we evaluate on 96 ambiguous requests from AMBROSIA \citep{NEURIPS2024_a4c942a8}. Since we treat final answers \(y\) by their task-relevant semantic meaning rather than as raw strings, we construct the corresponding equivalence classes using an LLM-based semantic grouping procedure tailored to the task (Appendix~\ref{app:implementation}).

\paragraph{Model-Realizable Reference Support.}
We construct $\mathcal{W}_T(x)$ from the distinct reference answers associated with the dataset-provided resolved inputs. To construct the model-specific reference $\mathcal{M}_T(x)$, we test whether the evaluated model reliably recovers each reference answer $y$ from its associated ground-truth clarification $c_y$ and if so, include it. (See Appendix~\ref{app:implementation} for more details).

\begin{table}[t]
    \centering
    \caption{\textbf{Model-realizable answer support recovery on QA and Text-to-SQL.}
    We report mean per-question set $F_1 \pm$ standard deviation and exact set match (EM) (\textcolor{colfirst}{Best} and \textcolor{colsecond}{second-best}).}
    \label{tab:qa_set_recovery}
    \vspace{3pt}
    \setlength{\tabcolsep}{2pt}
    \resizebox{1\columnwidth}{!}{%

\definecolor{colfirst}{HTML}{029E73}   
\definecolor{colsecond}{HTML}{DE8F05}  

\newcommand{\fem}[3]{%
    {$#1$$\scriptscriptstyle{\pm #2}$}\,/\,$#3$%
}

\begin{tabular}{ll|cccc}
\toprule

\multirow{2}{*}{\textbf{Dataset}} &
\multirow{2}{*}{\textbf{Method}} &
\textbf{Qwen3-4B} &
\textbf{Qwen3-30B-A3B} &
\textbf{GPT-5.4-mini} &
\textbf{GPT-5.5} \\

& &
$F_1$ / EM &
$F_1$ / EM &
$F_1$ / EM &
$F_1$ / EM \\
\midrule

\multirow{4}{*}{\shortstack{\small AmbigQA\\\small HQ}} & \small{ICE}
& \fem{\textcolor{colsecond}{\mathbf{39.23}}}{46.09}{\textcolor{colsecond}{\mathbf{33.00}}}
& \fem{\textcolor{colsecond}{\mathbf{52.52}}}{41.97}{\textcolor{colsecond}{\mathbf{33.00}}}
& \fem{\textcolor{colsecond}{\mathbf{63.93}}}{34.35}{\textcolor{colsecond}{\mathbf{31.00}}}
& \fem{\textcolor{colfirst}{\mathbf{77.91}}}{29.61}{\textcolor{colfirst}{\mathbf{50.00}}}
\\

 & \small{Sampling-50}
& \fem{33.07}{43.78}{25.00}
& \fem{43.63}{40.11}{22.00}
& \fem{55.51}{35.23}{24.00}
& \fem{61.79}{27.13}{21.00}
\\

 & \small{1-Enumeration}
& \fem{23.42}{38.45}{15.00}
& \fem{41.38}{36.37}{12.00}
& \fem{59.40}{34.74}{28.00}
& \fem{68.48}{29.52}{\textcolor{colsecond}{\mathbf{35.00}}}
\\

 & \cellcolor[gray]{0.9}{\small{\circleclar{}}}
& \cellcolor[gray]{0.9}{\fem{\textcolor{colfirst}{\mathbf{60.02}}}{46.26}{\textcolor{colfirst}{\mathbf{52.00}}}}
& \cellcolor[gray]{0.9}{\fem{\textcolor{colfirst}{\mathbf{64.88}}}{40.76}{\textcolor{colfirst}{\mathbf{49.00}}}}
& \cellcolor[gray]{0.9}{\fem{\textcolor{colfirst}{\mathbf{74.09}}}{34.78}{\textcolor{colfirst}{\mathbf{54.00}}}}
& \cellcolor[gray]{0.9}{\fem{\textcolor{colsecond}{\mathbf{68.85}}}{31.59}{\textcolor{colsecond}{\mathbf{35.00}}}}
\\

\midrule

\multirow{4}{*}{\shortstack{\small AmbigQA}} & \small{ICE}
& \fem{\textcolor{colsecond}{\mathbf{33.71}}}{44.54}{\textcolor{colsecond}{\mathbf{26.90}}}
& \fem{\textcolor{colsecond}{\mathbf{41.04}}}{42.63}{\textcolor{colsecond}{\mathbf{25.50}}}
& \fem{\textcolor{colfirst}{\mathbf{55.18}}}{42.50}{\textcolor{colfirst}{\mathbf{35.50}}}
& \fem{\textcolor{colfirst}{\mathbf{63.93}}}{35.82}{\textcolor{colfirst}{\mathbf{31.70}}}
\\

 & \small{Sampling-50}
& \fem{24.85}{40.58}{19.60}
& \fem{31.66}{42.25}{22.30}
& \fem{39.56}{40.10}{20.60}
& \fem{44.84}{36.69}{17.80}
\\

 & \small{1-Enumeration}
& \fem{14.33}{32.46}{10.20}
& \fem{24.50}{33.90}{7.40}
& \fem{38.28}{36.54}{12.30}
& \fem{46.93}{35.57}{16.80}
\\

 & \cellcolor[gray]{0.9}{\small{\circleclar{}}}
& \cellcolor[gray]{0.9}{\fem{\textcolor{colfirst}{\mathbf{48.28}}}{48.53}{\textcolor{colfirst}{\mathbf{44.70}}}}
& \cellcolor[gray]{0.9}{\fem{\textcolor{colfirst}{\mathbf{46.26}}}{45.28}{\textcolor{colfirst}{\mathbf{35.00}}}}
& \cellcolor[gray]{0.9}{\fem{\textcolor{colsecond}{\mathbf{46.55}}}{41.90}{\textcolor{colsecond}{\mathbf{27.60}}}}
& \cellcolor[gray]{0.9}{\fem{\textcolor{colsecond}{\mathbf{49.32}}}{36.49}{\textcolor{colsecond}{\mathbf{19.90}}}}
\\

\midrule

\multirow{4}{*}{\shortstack{\small AMBROSIA}} & \small{ICE}
& \fem{\textcolor{colfirst}{\mathbf{44.44}}}{37.49}{\textcolor{colsecond}{\mathbf{17.71}}}
& \fem{36.41}{33.73}{8.33}
& \fem{41.84}{32.92}{8.33}
& \fem{48.02}{29.95}{8.33}
\\

 & \small{Sampling-50}
& \fem{34.44}{40.46}{\textcolor{colsecond}{\mathbf{17.71}}}
& \fem{39.50}{36.61}{11.46}
& \fem{36.13}{31.34}{5.21}
& \fem{45.51}{31.33}{7.29}
\\

 & \small{1-Enumeration}
& \fem{38.70}{34.62}{11.46}
& \fem{\textcolor{colsecond}{\mathbf{45.45}}}{36.91}{\textcolor{colsecond}{\mathbf{18.75}}}
& \fem{\textcolor{colsecond}{\mathbf{53.26}}}{37.99}{\textcolor{colfirst}{\mathbf{28.12}}}
& \fem{\textcolor{colfirst}{\mathbf{51.98}}}{32.31}{\textcolor{colsecond}{\mathbf{11.46}}}
\\

 & \cellcolor[gray]{0.9}{\small{\circleclar{}}}
& \cellcolor[gray]{0.9}{\fem{\textcolor{colsecond}{\mathbf{42.64}}}{40.67}{\textcolor{colfirst}{\mathbf{21.88}}}}
& \cellcolor[gray]{0.9}{\fem{\textcolor{colfirst}{\mathbf{51.39}}}{39.76}{\textcolor{colfirst}{\mathbf{25.00}}}}
& \cellcolor[gray]{0.9}{\fem{\textcolor{colfirst}{\mathbf{56.39}}}{37.84}{\textcolor{colsecond}{\mathbf{26.04}}}}
& \cellcolor[gray]{0.9}{\fem{\textcolor{colsecond}{\mathbf{48.63}}}{34.53}{\textcolor{colfirst}{\mathbf{12.50}}}}
\\

\bottomrule
\end{tabular}%

    }
    \vspace{-1em}
\end{table}

\subsection{Estimating the Model-Realizable Answer Support $\mathcal{M}_T(x)$}
\label{sec:set-recovery}


\paragraph{Baselines.}
Existing methods cannot directly recover a model-realizable answer set. We therefore adapt ICE~\citep{pmlr-v235-hou24b} to recover answer sets by retaining answers with probability above a threshold, which is optimized per model–dataset pair to maximize the mean set F1. We further compare against two direct-generation baselines: \textsc{Sampling-50}, which samples 50 times, and \textsc{1-Enumeration}, which asks the model once to enumerate all plausible answers. Full details are provided in Appendix~\ref{app:implementation}.


\paragraph{Results.}
Table~\ref{tab:qa_set_recovery} shows that \circleclar{} performs strongly across datasets, achieving particularly large gains over baselines on smaller models and on the curated AmbigQA-HQ benchmark. The overall performance gap between AmbigQA and AmbigQA-HQ likely stems from annotation errors in the standard AmbigQA, which introduce noisy estimates of the ground-truth set. Across all methods, performance gaps shrink on larger models. Notably, the simple baseline of enumerating possible answers one-shot achieves remarkably high performance on larger models, indicating that increased scale enables models to surface multiple plausible alternatives directly within a single structured response.
\begin{wrapfigure}{r}{0.45\columnwidth}
    \centering
    \resizebox{\linewidth}{!}{%
        \input{figures/consistency_filter_effect_qa_only.pgf}%
    }
    \caption{
        \textbf{Effect of consistency check on support recovery.}
        Change in macro precision and recall from the initial candidate set
        \(\widetilde{\mathcal{Y}}_T(x)\) to the final \circleclar{} estimate
        \(\widehat{\mathcal{M}}_T(x)\).
    }
    \label{fig:consistency-filter}
\end{wrapfigure}
Furthermore, precision and recall breakdowns in Appendix~\ref{app:full-precision-recall} show that \circleclar{}'s gains are most consistent in precision. Moreover, evaluations on unambiguous, single-answer questions in Appendix~\ref{app:single-answer-qa} confirm that our method largely avoids hallucinating non-existent alternatives in unambiguous cases and, in fact, significantly outperforms other methods on smaller models. Finally, Appendix~\ref{app:sensitivity} examines sensitivity to the enumeration and recovery budgets and the recovery threshold. Increasing the enumeration budget improves performance for the GPT models on AmbigQA-HQ but decreases it for the Qwen models; on AmbigQA, performance is largely stable or decreases. This contrast may indicate that smaller models are primarily constrained by precision, whereas larger models are constrained by recall.


\paragraph{Effect of the consistency check.}

To isolate the effect of the consistency check, we compare the unfiltered candidate set $\widetilde{\mathcal{Y}}_T(x)$ which is obtained after the answer enumeration step, with the final estimate $\widehat{\mathcal{M}}_T(x)$ obtained after consistency filtering.
As shown in \Cref{fig:consistency-filter}, the consistency check yields substantial gains in precision across nearly all settings. Because this filtering step can only remove candidates, recall naturally decreases, but the resulting trade-off is generally small compared to the high increase in precision. Notably, the magnitude of the precision gain diminishes with model scale, suggesting that larger models hallucinate fewer invalid initial candidates, and explaining why the simple one-shot enumeration becomes increasingly effective for larger models.

\begin{figure}[t]
\centering

\setlength{\figfourheight}{0.44\textwidth}

\begin{subfigure}[t]{0.38\linewidth}
    \centering
    \resizebox{!}{\figfourheight}{%
        \makebox[327pt][l]{%
            \hspace*{-120pt}%
            \input{figures/heapmaptable.pgf}%
        }%
    }
    \vspace{0.6em}
    \captionsetup{labelformat=empty}
    \caption[short]{\textbf{Figure \thefigure{}a: Ambiguity detection (AUROC).} \circleclar{} is strongest overall; its clearest gains occur on AmbigQA-HQ.}
    \label{fig:qa_ambiguity_detection}
\end{subfigure}\hfill%
\begin{subfigure}[t]{0.58\linewidth}
    \centering
    \raisebox{-0.4em}{%
        \resizebox{!}{\figfourheight}{%
            \input{figures/policy_qa_bag_slim.pgf}%
        }%
    }
    \vspace{0.2em}
    \captionsetup{labelformat=empty}
    \caption[short]{\textbf{Figure \thefigure{}b: Three-way policy (BAcc).} \circleclar{} leads in most settings when deciding whether to abstain, clarify, or answer. Error bars 95\% bootstrap CIs; dashed line is constant-policy.}
    \label{fig:qa_response_policy}
\end{subfigure}
\label{fig:qa_downstream}
\end{figure}

\subsection{Downstream Uses of Model-Realizable Support}
\label{sec:downstream_qa}
As discussed in Section~\ref{sec:motivation}, the same ambiguous input should lead to different behavior from different models, depending on the interpretations they can actually resolve. This motivates using the recovered model-realizable support not only as an object of estimation, but as a basis for deciding whether the model faces unresolved ambiguity and how it should respond. We examine this through model-dependent ambiguity detection and response selection of whether to abstain, answer, or clarify.

\paragraph{Ambiguity detection.}
Unlike conventional ambiguity evaluation, which asks whether the input itself admits multiple valid answers, i.e., whether $|\mathcal{W}_T(x)| \geq 2$, we label an input as ambiguous for model $M$ when $|\mathcal{M}_T(x)| \geq 2$. As such, the cardinality of the estimated model realizable answer support provides a direct ambiguity score, i.e. $|\widehat{\mathcal{M}}_T(x)|$. We compare against ICE~\citep{pmlr-v235-hou24b}, which uses mutual information between clarifications and answers, CLARA-D~\citep{rosseeuw2025clara}, which measures dispersion among generated clarifications, INTENT-SIM~\citep{zhang-choi-2025-clarify}, which estimates entropy over simulated intents, and a modified APA~\citep{kim-etal-2024-aligning} version \(APA^\dagger\), which computes information gain via semantic entropy of the original vs self-disambiguated question. We evaluate all ambiguity scores using AUROC.

\Cref{fig:qa_ambiguity_detection} demonstrates that the cardinality of the recovered candidate set effectively identifies whether model-dependent ambiguity is present. \circleclar{} achieves the highest overall performance, with its most pronounced advantage occurring on AmbigQA-HQ where it consistently outperforms all other methods across all models. This improvement reflects again the higher quality of AmbigQA-HQ's reference sets: annotations in standard AmbigQA are frequently incomplete or noisy, which artificially distorts the true reference cardinality and degrades the quality of the resulting ambiguity labels (Appendix~\ref{app:ambigqa-hq}). Notably, most baseline methods perform close to random guessing. While prior work shows that these baselines perform well when evaluated against the world ambiguity $|\mathcal{W}_T(x)|$, our findings highlight again the crucial distinction: optimizing for $|\mathcal{W}_T(x)|$ does not translate to detecting whether the model itself perceives a question as ambiguous.

\paragraph{Response policy.}
The question of whether to abstain, answer, or clarify follows naturally from the model-realizable support. If the support is empty, the model has no valid answer it can provide (\textsc{Abstain}); if it contains a single answer, there is no alternative for the model to resolve (\textsc{Answer}); and if it contains multiple answers, the appropriate choice depends on the intended interpretation (\textsc{Clarify}). The ground-truth label is hence obtained following this logic on  $\mathcal{M}_T(x)$ while the predictions for \circleclar{} are based on $\widehat{\mathcal{M}}_T(x)$.
To assess whether explicitly estimating $\widehat{\mathcal{M}}_T(x)$ is beneficial, we compare against methods that prompt an action directly. \emph{Direct (Generic)} asks the model to clarify when ambiguity prevents answering, and otherwise to answer or abstain based on its knowledge. \emph{Direct (Model-Dependent)} instead asks it to apply our model-dependent decision rule implicitly without constructing an answer set. Lastly, BAG~\citep{baan2026clarify} bases its decision by examining previously sampled responses. We evaluate all methods using balanced accuracy.

\Cref{fig:qa_response_policy} shows that \circleclar{} performs best overall across models and datasets. Directly providing the model-dependent decision rule offers little consistent improvement over \emph{Direct (Generic)}, while BAG remains close to the constant-policy baseline in most settings. This indicates that the recovered answer support provides a stronger basis for response selection. Appendix~\ref{app:response-policy-criteria} complements this analysis by evaluating the set-estimating methods in Table~\ref{tab:qa_set_recovery} and additionally taking into account whether the produced answers of the sets are correct. We test this across progressively stricter answer-content requirements, with \circleclar{} generally strongest once the correctness of the recovered answers is taken into account.

\section{Discussion}
\label{sec:discussion}

\subsection{Limitations}
\label{subsec:limitation}

\circleclar{} is a precision-oriented estimator whose recall is limited by the initial candidate set: interpretations that the model can answer when explicitly specified may still be unlikely to arise during candidate generation and therefore remain undiscovered. Relatedly, we focus on the \emph{support} of model-realizable answers rather than the relative likelihood of interpretations, treating recovered alternatives equally through a set representation and its cardinality. Further, successful recovery through an answer--clarification--answer cycle provides evidence of model realizability but does not guarantee correctness, as a model may consistently reproduce a factual error; under black-box access, such confidently held errors are generally difficult to distinguish from correct knowledge. Similarly, under our desiderata, clarifications are defined by their answers, and, as such, a valid clarification could theoretically have an unrelated meaning. However, in practice, as we create the clarifications based on their answers and the question, this does not happen. Our evaluation is also limited by the quality and coverage of ambiguity annotations, which may omit valid interpretations or introduce unsupported ones. We mitigate this by constructing the curated AmbigQA-HQ benchmark (Appendix~\ref{app:ambigqa-hq}), but cannot guarantee complete coverage. Finally, \(\mathcal{M}_T(x)\) captures \emph{answerable} model-dependent ambiguity and does not include interpretations that a model can recognize but cannot answer, nor purely linguistic ambiguity it may detect without resolving. Explicitly separating interpretation recognition from answerability is an interesting direction for future work.


\subsection{A Model-Dependent View of Ambiguity}
\label{subsec:closing}

Our results support a view of ambiguity that distinguishes what an input permits from what a particular model can realize. While the world may admit many valid interpretations, only a subset may be accessible within a model's knowledge, and it is this model-realizable ambiguity that determines the alternatives on which the model can meaningfully act. This distinction complements the conventional, world-centric view of ambiguity and provides a more model-specific account of how underspecified inputs are represented and handled. Our clarification framework provides a natural way to characterize this object by establishing a one-to-one correspondence between valid answers and equivalence classes of clarifications. \circleclar{} is one concrete way of exploiting this correspondence under black-box access, but the underlying perspective is more general. We hope this view provides a useful basis for future work on ambiguity-aware language models and their interaction with users.

\section*{Acknowledgements}
The research presented has been performed in the frame of
the RADELN project funded by TUM Georg Nemetschek Institute Artificial Intelligence for the
Built World (GNI). It is further supported by the Bavarian Ministry of Economic Affairs, Regional
Development and Energy with funds from the Hightech Agenda Bayern. This paper is supported by the DAAD programme Konrad Zuse Schools of Excellence in Artificial Intelligence, sponsored by the Federal Ministry of Research, Technology and Space.

\bibliographystyle{plainnat}
\bibliography{references}


\clearpage
\appendix

\section{Proof of Proposition~\ref{prop:clarification-bijection}}
\label{app:proof-bijection}

\clarificationbijection*

\begin{proof}
By D1, every valid clarification $c \in \mathcal{C}_T(x)$ induces a unique answer $y_c$, and by D2,
$y_c \in \mathcal{W}_T(x)$. By D3, all clarifications in the same equivalence class induce the same answer, so $\Phi_x$ is well-defined.

To show injectivity, suppose
\[
    \Phi_x([c]_x) = \Phi_x([c']_x).
\]
Then $y_c = y_{c'}$, and D3 implies $c \sim_x c'$.
Hence $[c]_x = [c']_x$.

To show surjectivity, let $y \in \mathcal{W}_T(x)$.
By assumption, there exists a valid clarification $c \in \mathcal{C}_T(x)$ that induces $y$.
Therefore,
\[
    \Phi_x([c]_x) = y .
\]

Thus, $\Phi_x$ is both injective and surjective, and hence is a bijection.
\end{proof}

\section{Construction of AmbigQA-HQ}
\label{app:ambigqa-hq}

Our evaluation requires a reliable and sufficiently complete reference set of valid answers for each ambiguous question. AmbigQA's disambiguated rewrites and answers \citep{min-etal-2020-ambigqa} provide a natural empirical reference for \(\mathcal{W}_T(x)\), from which we construct each model-specific \(\mathcal{M}_T(x)\). However, the original annotations do not always meet this requirement. Some questions labeled ambiguous support only one answer under their original wording; their additional answers are obtained only by changing the question. For genuinely ambiguous questions, valid readings may be missing, a rewrite may remain unresolved or introduce an unsupported reading, several rewrites may yield the same answer, or an answer may be factually incorrect. These errors change both the membership and the size of the reference answer set. We therefore construct AmbigQA-HQ primarily by selecting genuinely ambiguous questions from AmbigQA and then verifying, repairing, and completing their disambiguated rewrites and answers. Of the 100 questions, 96 were selected from AmbigQA and four were newly constructed to cover additional ambiguity patterns; their answers were verified against authoritative sources.

\paragraph{Question selection.}
For each AmbigQA question, we ask whether its original wording genuinely supports more than one answer. We include it only when \(\lvert\mathcal{W}_T(x)\rvert>1\). We exclude cases in which the alternatives come from rewrites that change what the question asks. \Cref{tab:ambigqa-hq-selection-examples} shows both kinds of cases. The excluded rewrites either split a multi-part request or introduce a restriction absent from the original question.

\begin{table}[ht]
    \centering
    \caption{Examples of question selection.}
    \label{tab:ambigqa-hq-selection-examples}
    \vspace{4pt}
    \small
    \setlength{\tabcolsep}{4pt}
    \begin{tabular}{@{}p{0.24\linewidth}p{0.45\linewidth}p{0.23\linewidth}@{}}
        \toprule
        \textbf{Original question} & \textbf{Original AmbigQA rewrites} &
        \textbf{Decision} \\
        \midrule
        \raggedright ``When did the Ottoman Empire take over Palestine?'' &
        \raggedright ``When did the ottoman empire take over palestine for the
        \textbf{first time}?''
        \newline ``When did the ottoman empire take over palestine for the
        \textbf{second time}?'' &
        \raggedright \textbf{Included:} \emph{take over} can refer to either of
        two historical events. \tabularnewline
        \addlinespace
        \raggedright ``Is red bell pepper a fruit or vegetable?'' &
        \raggedright ``In terms of \textbf{botanicals}, is a red bell pepper a
        fruit or vegetables?''
        \newline ``In \textbf{culinary}, is a red bell pepper a fruit or
        vegetable?'' &
        \raggedright \textbf{Included:} botanical and culinary classifications
        give different valid answers. \tabularnewline
        \midrule
        \raggedright ``When did the Winter Olympic Games begin and where?'' &
        \raggedright ``\textbf{When} did the winter olympic games begin?''
        \newline ``\textbf{Where} did the first winter olympic games begin?'' &
        \raggedright \textbf{Not ambiguous:} the rewrites split one two-part
        request. \tabularnewline
        \addlinespace
        \raggedright ``What is the latest version of iPhone software?'' &
        \raggedright ``What is the latest version of iphone software,
        \textbf{released in 2017}?''
        \newline ``What is the latest version of iphone software,
        \textbf{released in 2016}?''
        \newline ``What is the latest version of iphone software,
        \textbf{released in June 2014}?'' &
        \raggedright \textbf{Not ambiguous:} \emph{latest} has one answer; the
        years change its scope. \tabularnewline
        \addlinespace
        \raggedright ``Who landed the first quad jump in figure skating?'' &
        \raggedright ``Who was the first \textbf{person} to land a quad jump in
        figure skating?''
        \newline ``Who was the first \textbf{woman} to land a quad jump in
        figure skating?''
        \newline ``Who landed the first quad jump in figure skating
        \textbf{ever}?'' &
        \raggedright \textbf{Not ambiguous:} \emph{first} has one answer;
        ``woman'' changes its scope. \tabularnewline
        \addlinespace
        \raggedright ``Who has the highest number of likes on Instagram?'' &
        \raggedright ``Who has the highest number of likes on instagram
        \textbf{in 2017}?''
        \newline ``Who has the highest number of likes on instagram
        \textbf{in 2016}?''
        \newline ``Who has the highest number of likes on instagram
        \textbf{in 2015}?'' &
        \raggedright \textbf{Not ambiguous:} \emph{highest} has one answer;
        the years change its scope. \tabularnewline
        \bottomrule
    \end{tabular}

\end{table}

\paragraph{Curating rewrites and answers.}
For every question, we manually examine its disambiguated rewrites and answers. Each rewrite should isolate one reading of the original question, and its answer should be factually correct. Guided by D1--D3, we remove unsupported rewrites, split rewrites that still allow several answers, and merge those that lead to the same answer. We also add missing plausible readings and make small wording changes where needed. This gives us a more reliable and complete reference answer set for evaluation. \Cref{tab:ambigqa-hq-repaired-examples} shows four representative examples.

\newcommand{\ann}[2]{%
    \textbf{#1}\,$\rightarrow$\,#2%
}

\begin{table}[t]
    \centering
    \caption{Examples of rewrite and answer curation in AmbigQA-HQ.}
    \label{tab:ambigqa-hq-repaired-examples}
    \vspace{3pt}
    \small
    \renewcommand{\arraystretch}{1.05}
    \setlength{\tabcolsep}{0pt}

    \begin{tabularx}{\linewidth}{
        @{}
        >{\RaggedRight\arraybackslash}X
        @{\hspace{1.8em}}
        >{\RaggedRight\arraybackslash}X
        @{}
    }
        \toprule
        \textbf{Original AmbigQA}
        &
        \textbf{Curated AmbigQA-HQ}
        \\
        \midrule

        \multicolumn{2}{@{}l@{}}{
            \textbf{D1 (Uniqueness): Split a multi-answer rewrite}
        }
        \\[-1pt]
        \multicolumn{2}{@{}p{\linewidth}@{}}{
            \emph{Question: ``Who is the girl in ZZ Top's `Legs' video?''}
        }
        \\[3pt]

        \ann{Salesgirl}{Wendy Frazier}
        \par
        \ann{Eliminator girls}{
            \{Dani\`ele Arnaud, Jeana Tomasino, Kymberly Herrin\}
        }
        &
        \ann{Shoe-store clerk}{Wendy Frazier}
        \par
        \ann{Dark-haired girl}{Jeana Tomasino}
        \par
        \ann{Girl wearing black}{Dani\`ele Arnaud}
        \par
        \ann{Girl wearing red}{Kymberly Herrin}
        \tabularnewline

        \midrule

        \multicolumn{2}{@{}l@{}}{
            \textbf{D2 (Faithfulness): Remove an unsupported rewrite}
        }
        \\[-1pt]
        \multicolumn{2}{@{}p{\linewidth}@{}}{
            \emph{Question: ``When did `The Haves and the Have Nots' start?''}
        }
        \\[3pt]

        \ann{TV series}{May 28, 2013}
        \par
        \ann{Stage play}{2011}
        \par
        \ann{Episodes 105--107}{three dates in 2017}
        &
        \ann{TV series}{May 28, 2013}
        \par
        \ann{Stage play}{2011}
        \tabularnewline

        \midrule

        \multicolumn{2}{@{}l@{}}{
            \textbf{D3 (Answer equivalence): Merge duplicate rewrites}
        }
        \\[-1pt]
        \multicolumn{2}{@{}p{\linewidth}@{}}{
            \emph{Question: ``Who made the song `Total Eclipse of the Heart'?''}
        }
        \\[3pt]

        \ann{Wrote/produced}{Jim Steinman (four rewrites)}
        \par
        \ann{Originally recorded}{Bonnie Tyler (two rewrites)}
        &
        \ann{Wrote and produced}{Jim Steinman}
        \par
        \ann{Originally recorded}{Bonnie Tyler}
        \tabularnewline

        \midrule

        \multicolumn{2}{@{}l@{}}{
            \textbf{Factual correction}
        }
        \\[-1pt]
        \multicolumn{2}{@{}p{\linewidth}@{}}{
            \emph{Question: ``When was the first `Texas Chainsaw Massacre'
            movie made?''}
        }
        \\[3pt]

        \ann{Filmed}{July 1974}
        &
        \ann{Filmed}{July--August 1973}
        \\

        \bottomrule
    \end{tabularx}

\end{table}

\FloatBarrier

\section{Additional Answer-Set Recovery Results}
\label{app:additional-set-recovery}

\subsection{Precision and Recall for Model-Realizable Answer Support}
\label{app:full-precision-recall}

\Cref{tab:full-precision-recall} complements the macro set $F_1$ and exact set match results in \Cref{tab:qa_set_recovery} with macro precision and recall. For each input $x$ we compute
\[
P(x)=\frac{\left|\widehat{\mathcal{M}}_T(x)\cap\mathcal{M}_T(x)\right|}{\left|\widehat{\mathcal{M}}_T(x)\right|},
\qquad
R(x)=\frac{\left|\widehat{\mathcal{M}}_T(x)\cap\mathcal{M}_T(x)\right|}{\left|\mathcal{M}_T(x)\right|}.
\]
If the estimated support is empty, precision is one only when the reference support is also empty, and zero otherwise. Recall is one whenever the reference support is empty. All other evaluation choices match \Cref{tab:qa_set_recovery}. Overall, \circleclar{} shows a more consistent advantage in precision than in recall.

\begin{table}[t]
    \centering
    \caption{Macro precision and recall (\%) corresponding to the set-$F_1$ results in \Cref{tab:qa_set_recovery}. Each cell reports $P/R$; recall is one when the reference support is empty. Green and orange mark the best and second-best method, respectively.}
    \label{tab:full-precision-recall}
    \vspace{3pt}
    \setlength{\tabcolsep}{2pt}
    \resizebox{\columnwidth}{!}{%

\definecolor{colfirst}{HTML}{029E73}   
\definecolor{colsecond}{HTML}{DE8F05}  

\newcommand{\prcell}[2]{%
    $#1$\,/\,$#2$%
}

\begin{tabular}{ll|cccc}
\toprule

\multirow{2}{*}{\textbf{Dataset}} &
\multirow{2}{*}{\textbf{Method}} &
\textbf{Qwen3-4B} &
\textbf{Qwen3-30B-A3B} &
\textbf{GPT-5.4-mini} &
\textbf{GPT-5.5} \\

& &
$P$ / $R$ &
$P$ / $R$ &
$P$ / $R$ &
$P$ / $R$ \\
\midrule

\multirow{4}{*}{\shortstack{\small AmbigQA\\\small HQ}} & \small{ICE}
& \prcell{\textcolor{colsecond}{\mathbf{39.83}}}{\textcolor{colsecond}{\mathbf{83.58}}}
& \prcell{\textcolor{colsecond}{\mathbf{54.58}}}{\textcolor{colsecond}{\mathbf{74.67}}}
& \prcell{\textcolor{colsecond}{\mathbf{69.08}}}{\textcolor{colfirst}{\mathbf{75.77}}}
& \prcell{78.50}{\textcolor{colfirst}{\mathbf{84.75}}}
\\

 & \small{Sampling-50}
& \prcell{37.00}{81.58}
& \prcell{54.37}{69.15}
& \prcell{66.55}{67.40}
& \prcell{\textcolor{colfirst}{\mathbf{84.19}}}{58.02}
\\

 & \small{1-Enumeration}
& \prcell{26.34}{\textcolor{colfirst}{\mathbf{84.58}}}
& \prcell{39.92}{\textcolor{colfirst}{\mathbf{79.98}}}
& \prcell{60.25}{74.57}
& \prcell{79.15}{\textcolor{colsecond}{\mathbf{68.62}}}
\\

 & \cellcolor[gray]{0.9}{\small{\circleclar{}}}
& \cellcolor[gray]{0.9}{\prcell{\textcolor{colfirst}{\mathbf{63.50}}}{79.58}}
& \cellcolor[gray]{0.9}{\prcell{\textcolor{colfirst}{\mathbf{72.83}}}{73.72}}
& \cellcolor[gray]{0.9}{\prcell{\textcolor{colfirst}{\mathbf{81.93}}}{\textcolor{colsecond}{\mathbf{75.48}}}}
& \cellcolor[gray]{0.9}{\prcell{\textcolor{colsecond}{\mathbf{80.90}}}{67.58}}
\\

\midrule

\multirow{4}{*}{\shortstack{\small AmbigQA}} & \small{ICE}
& \prcell{\textcolor{colsecond}{\mathbf{33.39}}}{\textcolor{colfirst}{\mathbf{87.51}}}
& \prcell{\textcolor{colsecond}{\mathbf{39.86}}}{\textcolor{colfirst}{\mathbf{84.58}}}
& \prcell{\textcolor{colfirst}{\mathbf{56.78}}}{\textcolor{colfirst}{\mathbf{77.83}}}
& \prcell{\textcolor{colfirst}{\mathbf{61.58}}}{\textcolor{colfirst}{\mathbf{83.11}}}
\\

 & \small{Sampling-50}
& \prcell{25.59}{\textcolor{colsecond}{\mathbf{85.07}}}
& \prcell{35.11}{74.55}
& \prcell{44.82}{69.61}
& \prcell{\textcolor{colsecond}{\mathbf{59.83}}}{52.78}
\\

 & \small{1-Enumeration}
& \prcell{15.56}{84.95}
& \prcell{22.52}{\textcolor{colsecond}{\mathbf{78.77}}}
& \prcell{37.13}{\textcolor{colsecond}{\mathbf{74.81}}}
& \prcell{54.91}{61.14}
\\

 & \cellcolor[gray]{0.9}{\small{\circleclar{}}}
& \cellcolor[gray]{0.9}{\prcell{\textcolor{colfirst}{\mathbf{49.49}}}{81.78}}
& \cellcolor[gray]{0.9}{\prcell{\textcolor{colfirst}{\mathbf{48.35}}}{74.72}}
& \cellcolor[gray]{0.9}{\prcell{\textcolor{colsecond}{\mathbf{50.68}}}{71.13}}
& \cellcolor[gray]{0.9}{\prcell{58.22}{\textcolor{colsecond}{\mathbf{61.21}}}}
\\

\midrule

\multirow{4}{*}{\shortstack{\small AMBROSIA}} & \small{ICE}
& \prcell{\textcolor{colfirst}{\mathbf{53.91}}}{\textcolor{colsecond}{\mathbf{47.22}}}
& \prcell{44.44}{52.78}
& \prcell{50.09}{51.56}
& \prcell{47.40}{\textcolor{colfirst}{\mathbf{62.15}}}
\\

 & \small{Sampling-50}
& \prcell{42.71}{35.42}
& \prcell{\textcolor{colsecond}{\mathbf{52.29}}}{51.56}
& \prcell{43.05}{49.31}
& \prcell{\textcolor{colsecond}{\mathbf{59.15}}}{48.96}
\\

 & \small{1-Enumeration}
& \prcell{43.23}{\textcolor{colfirst}{\mathbf{52.95}}}
& \prcell{48.96}{\textcolor{colfirst}{\mathbf{65.45}}}
& \prcell{\textcolor{colsecond}{\mathbf{56.77}}}{\textcolor{colsecond}{\mathbf{66.32}}}
& \prcell{\textcolor{colfirst}{\mathbf{65.45}}}{\textcolor{colsecond}{\mathbf{55.73}}}
\\

 & \cellcolor[gray]{0.9}{\small{\circleclar{}}}
& \cellcolor[gray]{0.9}{\prcell{\textcolor{colsecond}{\mathbf{51.74}}}{43.06}}
& \cellcolor[gray]{0.9}{\prcell{\textcolor{colfirst}{\mathbf{59.38}}}{\textcolor{colsecond}{\mathbf{60.94}}}}
& \cellcolor[gray]{0.9}{\prcell{\textcolor{colfirst}{\mathbf{60.28}}}{\textcolor{colfirst}{\mathbf{67.53}}}}
& \cellcolor[gray]{0.9}{\prcell{58.37}{53.65}}
\\

\bottomrule
\end{tabular}%
    }

\end{table}

\subsection{Sensitivity Analysis}
\label{app:sensitivity}
We examine the sensitivity of \circleclar{} to the enumeration budget \(B\), the number of recovery samples \(R\), and the recovery threshold \(\tau\). Unless varied, we use the default setting \(B=6\), \(R=10\), and \(\tau=0.6\). The model-specific reference sets are held fixed throughout these analyses.
\paragraph{Enumeration and recovery budgets.}

\Cref{fig:joint-budget-sensitivity} jointly varies the enumeration budget \(B=1,\ldots,6\) and the number of recovery samples \(R=1,\ldots,10\). Increasing \(B\) improves performance for the GPT models on AmbigQA-HQ, while the Qwen models decline as more candidates are introduced. On AmbigQA, performance decreases with larger \(B\) for all models except GPT-5.5, which remains nearly constant. The effect of \(R\) is generally smaller and non-monotonic: larger recovery budgets do not uniformly improve performance.

\begin{figure}[t]
    \centering
    \resizebox{0.98\linewidth}{!}{%
        \input{figures/appendix_figures/joint_budget_heatmap_corners.pgf}%
    }
    \caption{\textbf{Joint sensitivity to enumeration and recovery budgets.} Macro set \(F_1\) (\%) across enumeration budgets \(B=1,\ldots,6\) and recovery budgets \(R=1,\ldots,10\), with \(\tau=0.6\). Corner labels report absolute \(F_1\), while colors indicate deviations from each panel's midpoint, defined as the mean of its minimum and maximum \(F_1\), using a shared scale in percentage points. Boxed cells mark the default configuration \((B,R)=(6,10)\).}
    \label{fig:joint-budget-sensitivity}
\end{figure}

\paragraph{Recovery threshold.}
\Cref{fig:recovery-threshold} shows the effect of varying \(\tau\). Higher thresholds generally improve the Qwen models, while GPT-5.5 decreases on AmbigQA-HQ. We use \(\tau=0.6\) as the default setting.

\begin{figure}[t]
    \centering
    \resizebox{0.98\linewidth}{!}{
        \input{figures/appendix_figures/threshold_sweep.pgf}
    }
    \caption{
        Sensitivity to the recovery threshold \(\tau\). The dashed line marks the default value \(\tau=0.6\).
    }
    \label{fig:recovery-threshold}
\end{figure}

\subsection{Answer-Set Recovery on Single-Answer Questions}
\label{app:single-answer-qa}

We additionally evaluate all methods on 100 unambiguous questions from AmbigQA, for which the model-realizable answer support is empty or a singleton. As shown in \Cref{tab:single-answer-qa}, \circleclar{} achieves the highest set F1 for three of the four models. For GPT-5.5, all methods perform similarly, with Sampling-50 slightly
strongest.

\begin{table}[t]
    \centering
    \caption{Macro set $F_1$ and exact set match (EM) (\%) for recovering $\mathcal{M}_T(x)$ on 100 unambiguous AmbigQA questions. Green and orange mark the best and second-best method, respectively.}
    \label{tab:single-answer-qa}
    \vspace{3pt}
    \setlength{\tabcolsep}{2pt}
    \resizebox{\columnwidth}{!}{%

\definecolor{colfirst}{HTML}{029E73}
\definecolor{colsecond}{HTML}{DE8F05}

\newcommand{\singlefem}[2]{%
    $#1$\,/\,$#2$%
}

\begin{tabular}{ll|cccc}
\toprule
\multirow{2}{*}{\textbf{Dataset}} &
\multirow{2}{*}{\textbf{Method}} &
\textbf{Qwen3-4B} &
\textbf{Qwen3-30B-A3B} &
\textbf{GPT-5.4-mini} &
\textbf{GPT-5.5} \\
& &
$F_1$ / EM &
$F_1$ / EM &
$F_1$ / EM &
$F_1$ / EM \\
\midrule

\multirow{4}{*}{\shortstack{\small AmbigQA\\\small Single-answer}} & \small{ICE}
& \singlefem{43.90}{37.00}
& \singlefem{61.32}{54.00}
& \singlefem{71.20}{59.00}
& \singlefem{\textcolor{colsecond}{\mathbf{88.50}}}{\textcolor{colfirst}{\mathbf{86.00}}}
\\

 & \small{Sampling-50}
& \singlefem{\textcolor{colsecond}{\mathbf{54.22}}}{\textcolor{colsecond}{\mathbf{53.00}}}
& \singlefem{\textcolor{colsecond}{\mathbf{69.17}}}{\textcolor{colfirst}{\mathbf{68.00}}}
& \singlefem{\textcolor{colsecond}{\mathbf{78.83}}}{\textcolor{colsecond}{\mathbf{77.00}}}
& \singlefem{\textcolor{colfirst}{\mathbf{88.67}}}{\textcolor{colfirst}{\mathbf{86.00}}}
\\

 & \small{1-Enumeration}
& \singlefem{48.67}{48.00}
& \singlefem{57.40}{40.00}
& \singlefem{71.83}{59.00}
& \singlefem{87.67}{\textcolor{colsecond}{\mathbf{84.00}}}
\\

 & \cellcolor[gray]{0.9}{\small{\circleclar{}}}
& \cellcolor[gray]{0.9}{\singlefem{\textcolor{colfirst}{\mathbf{68.67}}}{\textcolor{colfirst}{\mathbf{68.00}}}}
& \cellcolor[gray]{0.9}{\singlefem{\textcolor{colfirst}{\mathbf{70.07}}}{\textcolor{colsecond}{\mathbf{67.00}}}}
& \cellcolor[gray]{0.9}{\singlefem{\textcolor{colfirst}{\mathbf{82.00}}}{\textcolor{colfirst}{\mathbf{80.00}}}}
& \cellcolor[gray]{0.9}{\singlefem{87.17}{\textcolor{colsecond}{\mathbf{84.00}}}}
\\

\bottomrule
\end{tabular}%
    }
\end{table}

\section{Content-Aware Evaluation of Set-Induced Response Policies}
\label{app:response-policy-criteria}

For an input $x$, let $\mathcal{M}=\mathcal{M}_T(x)$ be the model-realizable reference answer support and let $\widehat{\mathcal{M}}=\widehat{\mathcal{M}}_T(x)$ be a method's estimated answer support. The reference support induces \textsc{Abstain}, \textsc{Answer}, or \textsc{Clarify} when $|\mathcal{M}|$ is zero, one, or at least two, respectively. For any answer support \(S\), let \(a(S)\) denote the induced response action: \textsc{Abstain} if \(|S|=0\), \textsc{Answer} if \(|S|=1\), and \textsc{Clarify} if \(|S|\geq 2\).

\paragraph{Criteria.} 
The main policy experiment in Section~5.2 evaluates whether the estimated and reference supports induce the same response action:

\[
S_{\mathrm{action}}(\widehat{\mathcal{M}},\mathcal{M})
=
\mathbbm{1}\!\left[
a(\widehat{\mathcal{M}})=a(\mathcal{M})
\right].
\]

This action-only criterion permits a common comparison with Direct (Generic) and Direct (Model-Dependent), which predict an action without constructing an answer set. Here, we additionally evaluate whether methods that estimate an answer set recover content consistent with the reference support. We include the action-only criterion as a reference point and progressively strengthen the answer-content requirement. We rescore the same fixed estimates used in Section~\ref{sec:set-recovery}. Direct policies are omitted because they do not output answer sets.

The first content-aware criterion requires the correct unique answer in the \textsc{Answer} regime and at least two matched reference answers in the \textsc{Clarify} regime:

\[
S_{\mathrm{two}}(\widehat{\mathcal{M}},\mathcal{M})=
\begin{cases}
\mathbbm{1}\!\left[\widehat{\mathcal{M}}=\varnothing\right], & |\mathcal{M}|=0,\\
\mathbbm{1}\!\left[\widehat{\mathcal{M}}=\mathcal{M}\right], & |\mathcal{M}|=1,\\
\mathbbm{1}\!\left[|\widehat{\mathcal{M}}\cap\mathcal{M}|\geq2\right], & |\mathcal{M}|\geq2.
\end{cases}
\]

For \textsc{Clarify}, this criterion permits additional predictions outside the reference support. A stricter supported-subset criterion additionally requires every predicted answer to be reference-supported:

\[
S_{\mathrm{subset}}(\widehat{\mathcal{M}},\mathcal{M})=
\begin{cases}
\mathbbm{1}\!\left[\widehat{\mathcal{M}}=\varnothing\right], & |\mathcal{M}|=0,\\
\mathbbm{1}\!\left[\widehat{\mathcal{M}}=\mathcal{M}\right], & |\mathcal{M}|=1,\\
\mathbbm{1}\!\left[|\widehat{\mathcal{M}}|\geq2 \,\land\, \widehat{\mathcal{M}}\subseteq\mathcal{M}\right], & |\mathcal{M}|\geq2.
\end{cases}
\]

Finally, exact-set success requires complete recovery of the reference support:

\[
S_{\mathrm{exact}}(\widehat{\mathcal{M}},\mathcal{M})=\mathbbm{1}\!\left[\widehat{\mathcal{M}}=\mathcal{M}\right].
\]

The four criteria form the following ordering in strictness:

\[
S_{\mathrm{exact}}=1
\;\Longrightarrow\;
S_{\mathrm{subset}}=1
\;\Longrightarrow\;
S_{\mathrm{two}}=1
\;\Longrightarrow\;
S_{\mathrm{action}}=1.
\]

For each criterion, we compute the success rate separately within the three reference-cardinality regimes and report their unweighted mean, mirroring the balanced treatment of the three response actions in the main experiment.

\begin{figure}[t!]
\centering
\begin{subfigure}[t]{0.49\textwidth}
\centering
\resizebox{\linewidth}{!}{\input{figures/appendix_figures/policy_action_only.pgf}}
\caption{Action-only.}
\label{fig:policy-action-only}
\end{subfigure}\hfill
\begin{subfigure}[t]{0.49\textwidth}
\centering
\resizebox{\linewidth}{!}{\input{figures/appendix_figures/policy_relaxed_answer_coverage.pgf}}
\caption{At least two matched answers.}
\label{fig:policy-two-matched}
\end{subfigure}

\medskip

\begin{subfigure}[t]{0.49\textwidth}
\centering
\resizebox{\linewidth}{!}{\input{figures/appendix_figures/policy.pgf}}
\caption{Supported subset.}
\label{fig:policy-main}
\end{subfigure}\hfill
\begin{subfigure}[t]{0.49\textwidth}
\centering
\resizebox{\linewidth}{!}{\input{figures/appendix_figures/policy_exact_set.pgf}}
\caption{Exact-set.}
\label{fig:policy-exact-set}
\end{subfigure}

\caption{Content-aware response-policy evaluation of the four methods. Each panel reports the unweighted mean of the success rates in the \textsc{Abstain}, \textsc{Answer}, and \textsc{Clarify} reference-cardinality regimes. Higher is better.}
\label{fig:response-policy-criterion-sensitivity}
\end{figure}

Figure~\ref{fig:response-policy-criterion-sensitivity} shows that action-only rankings are mixed. Once answer identity is taken into account, \textsc{CircleClar} is generally strongest on AmbigQA-HQ, whereas results on the original AmbigQA are more mixed. The difference between the action-only and content-aware evaluations shows that predicting the correct response mode does not necessarily imply recovering the correct answers: a method may estimate the appropriate support cardinality while assigning that cardinality to unsupported content.

\FloatBarrier

\section{Experimental Settings and Prompts}
\label{app:implementation}

\paragraph{Overview.}
Across QA and text-to-SQL, we collapse identical outputs and use GPT-5.4-mini at temperature 0 for semantic grouping and equivalence judgment. Before scoring, we semantically group each method's predictions under the original input and match them with the model-specific reference answer groups. We compute precision, recall, set $F_1$, and exact set match (EM) from these matches. Within each task, the same equivalence judge is used during reference construction and the \circleclar{} consistency check. Task-specific generation and set-construction rules and prompts are given below. We use a fixed recovery threshold $\tau=0.6$ for \circleclar{} across all models and datasets. Under our default recovery budgets, this corresponds to requiring a strict majority of re-answers to fall in the semantic answer group matching the original candidate. For the construction of  $\mathcal{M}_T(x)$, we use the same recovery criterion as \circleclar{}, using stricter thresholds of $0.9$ for QA and $0.8$ for Text-to-SQL.

\subsection{Open-Domain Question Answering}
 Let $m$ be the mass of the top re-answer group. \Cref{tab:qa-frozen-configuration} summarizes the QA generation budgets and set-construction rules.

\begin{table}[t]
    \centering
    \caption{QA generation budgets and set-construction rules.}
    \label{tab:qa-frozen-configuration}
    \vspace{3pt}
    \small
    \setlength{\tabcolsep}{3pt}
    \renewcommand{\arraystretch}{1.10}
    \begin{tabularx}{\columnwidth}{@{}p{0.24\columnwidth}p{0.39\columnwidth}X@{}}
        \toprule
        \textbf{Method} & \textbf{Generation} & \textbf{Set construction} \\
        \midrule
        Reference construction
        & 10 answers per dataset-provided rewrite, $T=0.3$.
        & Accept if $m\geq0.9$, and the top answer matches the candidate. \\
        \midrule

        \circleclar{}
        & 6 answer-set enumerations, $T=0.5$; one clarification per candidate,
          $T=0$; 10 re-answers per clarification, $T=0.3$.
        & Apply the same consistency check as in reference construction to each
          candidate, with $m\geq0.6$. \\

        ICE
        & 2 clarification samples, $T=1.0$, and 8 answers per clarification, $T=0.5$,
          using the released AmbigQA prompts.
        & Threshold answer counts separately within each clarification, union the groups, and sweep $t=1,\ldots,8$. \\

        Sampling-50
        & 50 single-answer samples from the unresolved question, $T=0.5$.
        & Discard \texttt{UNKNOWN} outputs, use the remaining sampled answers directly. \\

        1-Enumeration
        & One answer-set enumeration, $T=0.5$, using the \circleclar{} prompt.
        & Use the enumerated answers directly. \\
        \bottomrule
    \end{tabularx}
\end{table}

\subsubsection{Prompt Templates}
ICE uses the released AmbigQA few-shot prompts of \citet{pmlr-v235-hou24b} without modification, so we do not duplicate them here.

\newtcblisting{promptbox}[1]{
    enhanced,
    breakable,
    listing only,
    listing engine=listings,
    title={#1},
    colback=black!2,
    colframe=black!35,
    boxrule=0.4pt,
    arc=1pt,
    left=4pt,
    right=4pt,
    top=3pt,
    bottom=3pt,
    fonttitle=\small\bfseries,
    listing options={
        basicstyle=\ttfamily\scriptsize,
        breaklines=true,
        columns=fullflexible,
        keepspaces=true,
        showstringspaces=false
    }
}

\begin{promptbox}{Answer-set enumeration (\circleclar{} and 1-Enumeration)}
SYSTEM:
You answer open-domain questions by listing candidate final answers.

List all distinct final answers that could be correct under the original question or different reasonable readings of the original question.

Each item must be a standalone answer span, not a sentence or explanation.
Use the full canonical surface form for each answer.
For people, output full names, not initials, surnames, or shortened names.
For organizations, works, places, and events, output the full official or commonly used name, not an acronym or shorthand.
For full dates, use full month names. For counts or measurements, prefer digits.

Do not ask clarifying questions. Do not explain. Do not output aliases or formatting variants as separate answers.

Output exactly a numbered list:
1. answer
2. answer

USER:
Question: {question}

Answers:
\end{promptbox}

\begin{promptbox}{Semantic grouping of predicted answers}
SYSTEM:
You are a semantic clustering engine.

USER:
Task:
Cluster the answers by their semantic meaning for the given question.

First, silently identify a few semantic tags for each answer.
Then cluster answers that have the same core meaning.

Rules:
- Do not judge factual correctness.
- Do not add missing answers.
- Each answer must appear in exactly one cluster.
- Use as many or as few clusters as needed.
- Answers should be in the same cluster only if they would be treated as the same answer to the question.
- If two answers differ in date, number, or specificity, keep them separate.

Return JSON ONLY.
{
  "clusters": [
    {
      "cluster_id": 0,
      "member_indices": [0, 2],
      "cluster_tags": ["short tag"]
    }
  ]
}

Question:
{question}

Answers:
{numbered_answers}
\end{promptbox}

\begin{promptbox}{Answer-conditioned clarification}
SYSTEM:
You rewrite ambiguous open-domain questions into unambiguous questions.

Given an original question and a target answer, write one clarified question whose unique answer is the target answer under some reasonable interpretation of the original question.

Do not fact-check beyond choosing a reasonable interpretation. Do not explain. Do not include the target answer itself in the clarified question unless it already appears in the original question. Do not change the kind of answer the original question asks for.

The clarified question must not be identical to the original question.
The clarified question should add enough constraint that the target answer, rather than a generic or more common answer, would be the unique answer.

Output exactly one line:
Clarification: clarified question?

USER:
Original question: {question}
Target answer: {source_answer}

Clarification:
\end{promptbox}

\begin{promptbox}{Resolved-question answering (reference construction and \circleclar{})}
SYSTEM:
You answer fact-seeking questions directly.

Return exactly one standalone final answer span, not a sentence, list, explanation, citation, or hedge.

Use the full canonical surface form for the answer.
For people, output full names, not initials, surnames, or shortened names.
For organizations, works, places, and events, output the full official or commonly used name, not an acronym or shorthand.
For full dates, use full month names. For counts or measurements, prefer digits.

If several aliases would refer to the same answer, choose one canonical form.
Do not output aliases or formatting variants as separate answers.

USER:
Question: {resolved_question}

Answer:
\end{promptbox}

\begin{promptbox}{Sampling-50 direct answering}
SYSTEM:
You answer open-domain factual questions directly.

Return exactly one short factual answer to the user's original question.
Do not list alternatives, ask a clarification question, explain your reasoning, or add a label.
Prefer a short canonical answer string such as a person, date, place, organization, title, number, or noun phrase.
If you cannot provide a plausible factual answer from your own knowledge, return exactly UNKNOWN.

Output exactly one non-empty line.

USER:
Question: {question}

Answer:
\end{promptbox}

\begin{promptbox}{Prediction-group to reference-group judgment}
SYSTEM:
You are an answer-equivalence judge. Do not answer the question yourself.

USER:
Question:
{question}

Canonical answer groups:
{canonical_groups_json}

Predicted semantic cluster:
{target_prediction_json}

The strings in the predicted cluster have already been grouped as one answer.
Decide whether that answer is equivalent to exactly one canonical group.
Treat the canonical groups as authoritative. Return NONE if it matches no group,
could match more than one group, contradicts a group, or is unclear.

Return JSON only, with `group_id` set to one supplied group ID or `NONE`:
{"group_id": "G1", "reason": "one concise sentence"}
\end{promptbox}

\begin{promptbox}{Pairwise answer-equivalence judgment (reference construction and \circleclar{} consistency check)}
SYSTEM:
You are an expert semantic equivalence judge. Do not answer the question.

USER:
Question: {question}
Answer A: {ref_text}
Answer B: {candidate}

Do Answer A and Answer B convey the same essential answer to the question?

Return true only if they express the same core fact, entity, or value, regardless of phrasing or formatting.
Aliases, former names, stage names, pen names, nicknames, and character identities count as equivalent when they identify the same entity.
Return false if they differ, contradict each other, are merely related, or equivalence is unclear.

Respond as JSON:
{"score": true or false, "reason": "one sentence"}
\end{promptbox}

\subsection{Executable Text-to-SQL}

 Let $m$ be the mass of the top re-answer group. \Cref{tab:sql-frozen-configuration} summarizes the SQL generation budgets and set-construction rules.

\begin{table}[H]
    \centering
    \caption{Text-to-SQL generation budgets and set-construction rules.}
    \label{tab:sql-frozen-configuration}
    \vspace{3pt}
    \small
    \setlength{\tabcolsep}{3pt}
    \renewcommand{\arraystretch}{1.10}
    \begin{tabularx}{\columnwidth}{@{}p{0.24\columnwidth}p{0.39\columnwidth}X@{}}
        \toprule
        \textbf{Method} & \textbf{Generation} & \textbf{Set construction} \\
        \midrule
        Reference construction
        & 5 SQL samples per dataset-provided resolved interpretation,
          $T=0.2$.
        & Accept if $m\geq0.8$ and the top answer matches the candidate. \\
        \midrule

        \circleclar{}
        & 5 answer-set enumerations, $T=1.0$; one clarification per grouped candidate, $T=0$; 5 recovery queries per clarification, $T=0.2$.
        & Execute and semantically group the candidates; apply the same consistency check with $m\geq0.6$. \\

        ICE
        & 2 clarification samples, $T=1.0$, using up to 8 distinct rewrites (or the original request if none parse); 8 SQL samples per rewrite, $T=0.5$.
        & Threshold answer counts separately within each clarification, union the groups, and sweep $t=1,\ldots,8$. \\

        Sampling-50
        & 50 single-query samples from the unresolved request, $T=0.5$.
        & Use the sampled answers directly. \\

        1-Enumeration
        & One answer-set enumeration, $T=1.0$, using the \circleclar{} prompt.
        & Use the enumerated answers directly. \\
        \bottomrule
    \end{tabularx}

\end{table}

\subsubsection{Prompt Templates}

\begin{promptbox}{SQL answer-set enumeration (\circleclar{} and 1-Enumeration)}
The request might be ambiguous. These ambiguities can arise from various factors, including but not limited to:

1. Ambiguous references to entities in the request.
2. Multiple properties of objects or entities leading to different interpretations.
3. Unclear timestamps or time conventions.
4. Unclear locations or spatial scope.
5. Multiple valid answer types based on the request.

The task is to write SQLite queries based on the provided request. The request can be ambiguous and admit different reasonable interpretations. Write all SQL queries corresponding to the reasonable interpretations you can identify, and separate each SQL query with an empty line.

Do not include explanations, labels, Markdown fences, or duplicate/formatting variants. Do not select extra columns beyond those requested. Each query must be a single read-only SELECT or WITH statement.

Given the following SQLite database schema and content:

DATABASE_DUMP

Answer the following unresolved request:
UNRESOLVED_REQUEST

Return only the SQL queries, separated by empty lines.
\end{promptbox}

\begin{promptbox}{Semantic grouping of SQL answers}
Partition the SQL outputs by the reasonable interpretation of the input request that they answer.

Compare meaning, not table shape. Merge outputs with the same requested entities, filters, scope, grouping or aggregation, time convention, alternatives, and facts even when one adds ancillary columns, reorders columns, changes SQL form, or uses a lossless vertical/horizontal layout.

Critical contrast: if an ambiguous phrase has plausible facts A and B, keep A-only, B-only, and A+B as separate interpretations. An added field is ancillary only when it is context; it is not ancillary when it realizes another live reading. Sharing a column or having a lossless projection is therefore evidence, not the decision.

Use the input, schema, SQL, and complete typed results together. Assign each output to exactly one cluster. If semantic equivalence remains uncertain, keep the outputs separate.

Return only JSON:
{"clusters":[{"members":["id1","id2"],"interpretation":"short natural-language description"}]}

Input request:
{{INPUT_REQUEST}}

Database schema:
{{DATABASE_SCHEMA}}

SQL and results:
{{SQL_RESULTS}}
\end{promptbox}

\begin{promptbox}{Answer-conditioned SQL clarification (\circleclar{})}
SYSTEM:
You rewrite ambiguous database requests into unambiguous requests.

Given an original request and a target query, write one clarification that expresses the query's interpretation as a reasonable reading and rules out the other reasonable readings.

Do not explain or mention SQL, schema names, or result values. Do not change what kind of output the original request asks for.

The clarification must not be identical to the original or merely cosmetic.
Never output REJECT.

Output exactly one line:
Clarification: clarified request

USER:
SQLite database context:

DATABASE_DUMP

Original request:
UNRESOLVED_REQUEST

Target query:
REPRESENTATIVE_SQL

Clarification:
\end{promptbox}

\begin{promptbox}{Resolved-request SQL generation (reference construction and \circleclar{})}
The task is to write exactly one SQLite query for the resolved request below. The request has already been disambiguated. Do not include explanations, Markdown fences, or alternative queries, and do not select extra columns beyond those requested.

Given the following SQLite database schema and content:

DATABASE_DUMP

Resolved request:
RESOLVED_REQUEST

Return exactly one SQL query.
\end{promptbox}

\begin{promptbox}{Direct SQL generation (Sampling-50 and ICE answering)}
You are given a SQLite database dump and one database request.

Database dump:
DATABASE_DUMP

Request:
REQUEST

Write exactly one read-only SQLite SELECT or WITH query that answers the request. 
Return only the SQL query, with no prose, label, or Markdown fence.
\end{promptbox}

\begin{promptbox}{ICE clarification generation}
The request might be ambiguous. These ambiguities can arise from various
factors, including but not limited to:

1. Ambiguous references to entities in the request.
2. Multiple properties of objects or entities leading to different
   interpretations.
3. Unclear timestamps or time conventions.
4. Unclear locations or spatial scope.
5. Multiple valid answer types based on the request.

You are given one unresolved natural-language database request and the complete
SQLite database dump available to the answering model.

Database dump:
DATABASE_DUMP

Unresolved request:
UNRESOLVED_REQUEST

Provide one distinct rephrasing for each reasonable reading of the unresolved
request that you can identify. By "rephrasing," we mean reformulating the
request to be clear and direct, eliminating ambiguity without altering its
original intent. Each rephrasing must select exactly one reasonable reading,
preserve the requested output, and be answerable from this database.

Do not seek further information, write SQL, answer the request, mention
ambiguity labels, or add a reading unsupported by the wording and database.
Return at most eight rephrasings. If the original request is already clear and
unambiguous, return an empty list.

Return exactly one JSON object with this schema and no other text:
{"clarifications":["first rewritten request","second rewritten request"]}
\end{promptbox}

\begin{promptbox}{SQL prediction-group to reference-group judgment}
Judge whether one predicted SQL answer group is equivalent to exactly one reference answer group. Do not answer the request or write SQL.

The predicted group has already been semantically grouped and should be judged as one answer. Use the request, schema, SQL, and complete typed results. Compare meaning, not SQL syntax or table shape. Allow ancillary columns, column reordering, and lossless reshaping only when the requested entities, filters, scope, grouping or aggregation, time convention, alternatives, and facts are preserved. Do not accept accidental result equality when the SQL meanings differ.

Treat the reference groups as authoritative. Return NONE if the prediction matches no group, could match more than one group, changes a requested fact or interpretation, or is unclear.

Return only JSON, with group_id set to one supplied group ID or NONE:
{"group_id":"G1","reason":"one concise sentence"}

Input request:
{{INPUT_REQUEST}}

Database schema:
{{DATABASE_SCHEMA}}

Reference answer groups:
{{REFERENCE_GROUPS_JSON}}

Predicted answer group:
{{PREDICTED_GROUP_JSON}}
\end{promptbox}

\begin{promptbox}{Consistency-check judgment (reference construction and \circleclar{})}
Judge whether re-answering the resolved request preserves the source answer on this fixed database.

The source SQL and result are authoritative; never correct or replace them. Use the resolved request, schema, SQL, complete typed results, and deterministic relation checks. Treat the relation checks as evidence, not as the final semantic decision.

Return equivalent only when every requested source fact is preserved. Allow ancillary-column differences and lossless reshaping or derivation. Return not_equivalent if any requested source fact, row, entity, filter, scope, grouping or aggregation, time convention, or live alternative is missing, replaced, or changed. Do not accept accidental result equality when the SQL meanings differ. Return uncertain only when the evidence is insufficient.

Return only JSON:
{"decision":"equivalent|not_equivalent|uncertain","reason":"short reason"}

Resolved input request:
{{RESOLVED_REQUEST}}

Database schema:
{{DATABASE_SCHEMA}}

Deterministic relation checks:
{{DETERMINISTIC_RELATION_CHECK}}

Source SQL and result:
{{SOURCE_SQL_RESULT}}

Stable top-mass recovery SQL and results:
{{STABLE_RECOVERY_SQL_RESULTS}}
\end{promptbox}

\subsection{Direct Response-Policy Baselines}
\label{app:direct-policy-baselines}
Both direct baselines receive only the unresolved question, select a response
action without constructing an answer set and are queried once per input at temperature \(T=0\). We parse the predicted action from the \texttt{action} field.

\begin{promptbox}{Direct (Generic) response-policy baseline}
SYSTEM:
You are an AI assistant choosing how to respond to a user's factual question. Do not answer the question itself and do not write a clarification question. Return only the requested JSON object, with no Markdown or extra text.

USER:
Choose exactly one conversational action for the user's question: ANSWER, CLARIFY, or ABSTAIN.

Use these conversational criteria:
- ANSWER when the question is sufficiently clear and you know the answer.
- CLARIFY when ambiguity or missing details prevent a direct answer, and the user can resolve the issue by specifying their intent without needing to know the factual answer themselves.
- ABSTAIN when the question is sufficiently clear but you do not know the answer.

Choose the action you would take now from the question alone. Do not assume access to search, tools, reference answers, or additional evidence.

Return exactly this JSON schema:
{"action":"ANSWER|CLARIFY|ABSTAIN"}

User question: {question}
\end{promptbox}

\begin{promptbox}{Direct (Model-Dependent) response-policy baseline}
SYSTEM:
You are an AI assistant deciding how to respond to a user's factual question.

Do not answer the question itself.
Do not write a clarification question.
Return only the requested JSON object, with no Markdown or additional text.

USER:
Choose exactly one action for the user's question:
ANSWER, CLARIFY, or ABSTAIN.

Base your decision only on the question and your own current knowledge.

Use the following criteria:

- ANSWER if you can reliably provide exactly one distinct answer. Choose
  ANSWER even if the question has other plausible readings, as long as you
  cannot reliably provide a different answer for any of them.

- CLARIFY if at least two plausible readings of the question lead to different
  answers that you can reliably provide, so the user's intended reading is
  needed before choosing an answer.

- ABSTAIN if you cannot reliably provide an answer under any plausible reading
  of the question.

Do not choose CLARIFY merely because the wording allows multiple readings.
The decision should depend on the different answers you can reliably provide,
not only on ambiguity in the wording.

Do not assume access to search, tools, reference answers, or additional
evidence.

Return exactly:
{"action":"ANSWER|CLARIFY|ABSTAIN"}

User question: {question}
\end{promptbox}


\newpage

\end{document}